\newtheorem{lem}{Lemma}
\newtheorem{propo}{Proposition}
\newcommand{\RR}{\ensuremath{\mathbb{R}}}
\newcommand{\HH}{\ensuremath{\mathbb{R}^d}}
\newcommand{\LL}{\ensuremath{\mathbb{R}^k}}
\newcommand{\calB}{\mathcal{B}}
\newcommand{\normzero}[1]{{\|{#1}\|}_{0}}
\newcommand{\normone}[1]{{\|{#1}\|}_{1}}
\newcommand{\normOneInf}[1]{{\|{#1}\|}_{1,\infty}}
\newcommand{\orc}{\includegraphics[height=\fontcharht\font`A]{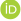}}
\begin{document}

\title{Near-Linear Time Projection onto the $\ell_{1,\infty}$ Ball;\\
Application to Sparse Neural Networks
}

\author{\IEEEauthorblockN{Guillaume Perez\href{mailto:guillaume.perez06@gmail.com}{\Letter}
\href{https://orcid.org/0000-0001-6473-583X}{\orc}}
\IEEEauthorblockA{\textit{Universit\'e C\^ote d'Azur, CNRS} \\
Sophia Antipolis, 06900, France}
\and
\IEEEauthorblockN{Laurent Condat\href{mailto:laurent.condat@kaust.edu.sa}{\Letter}
\href{https://orcid.org/0000-0001-7087-1002}{\orc}}
\IEEEauthorblockA{\textit{King Abdullah University of Science and Technology} \\
 \textit{(KAUST)}, Thuwal, Kingdom of Saudi Arabia\\
 \& SDAIA-KAUST AI}
\and
\IEEEauthorblockN{Michel Barlaud\href{mailto:barlaud@i3s.unice.fr}{\Letter}
\href{https://orcid.org/0000-0001-9093-033X}{\orc}}
\IEEEauthorblockA{\textit{Universit\'e C\^ote d'Azur, CNRS} \\
Sophia Antipolis, 06900, France}
}


\maketitle

\begin{abstract}
Looking for sparsity is nowadays crucial to speed up the training of large-scale neural networks.
Projections onto the $\ell_{1}$ and $\ell_{1,\infty}$ are among the 
most efficient techniques to sparsify and reduce the overall cost of neural networks.
In this paper, we introduce a new projection algorithm for the $\ell_{1,\infty}$ norm ball.
Its worst-case time complexity  is $\mathcal{O}\big(nm+J\log(nm)\big)$ for a matrix in $\mathbb{R}^{n\times m}$.
$J$ is a term that tends to 0 when the sparsity is high, and to $ n \times m$ in the worst case.
The algorithm is easy to implement and it is guaranteed to converge to the exact solution in finite time.
Moreover, we propose to incorporate the $\ell_{1,\infty}$ ball projection while training an autoencoder
to enforce feature selection and sparsity of the weights.
Sparsification appears in the encoder to primarily do feature selection due to our application in biology,
where only a very small part ($<2\%$) of the data is relevant.
We show that in both the biological and general cases of sparsity, our method is the fastest.
\end{abstract}

\begin{IEEEkeywords}
Projection,  optimization, gradient-based methods, green AI
\end{IEEEkeywords}

\section{Introduction}
It is well known that the impressive performance of neural networks is achieved at the cost of a high processing complexity and large memory usage.
In fact, energy consumption and memory limits are the main bottleneck for training neural networks \cite{zeng2021pangu,AIgreen}.
This implies that most of the manpower energy is put into making the current hardware architectures able to work with such a high demand.
Such methods range from parallelism to rematerialization \cite{kumar2019efficient,jain2020checkmate}, 
the latter being NP-hard to solve.
Recently, advances in sparse recovery and deep learning have shown that training neural networks with 
sparse weights not only improves the processing time and batch sizes,
but most importantly improves the robustness and test accuracy of the learned models.

Looking for sparsity appears in many machine learning applications,
such as the identification of biomarkers in biology \cite{abeel2009robust,he2010stable} 
or the recovery of sparse signals in compressed sensing \cite{donoho2006compressed,wright09,figueiredo07}.
For example, consider the problem of minimizing a reconstruction cost function $F$ of a parameter vector $x$. 
In addition, consider constraining the number of nonzero components ($\ell_0$ seminorm) of the learned vector to at most a given sparsity value:
\begin{equation*}
\underset{x \in \RR^d}{\text{minimize}} \quad F(x) \quad \text{ subject to }  \quad \normzero{x} \leq \epsilon.
\end{equation*}
This problem is called \textit{feature selection} and has been a large research area in machine learning.
Unfortunately, this problem is generally strictly nonconvex, combinatorial, and very difficult to solve \cite{natarajan1995sparse}. 
Nevertheless, many relaxed methods have been proposed, such as the \textit{LASSO} method \cite{tRS,sparse}, 
which considers the  $\ell_1$ norm instead of the  $\ell_0$ seminorm of $x$.
One of the reasons why such regularization techniques are widely used is that 
Cand\`es and Tao proved that using a  $\ell_1$ projection gives near-optimal guarantees on the reconstruction loss \cite{candes2006near}. 
Since then, many methods have been defined using either the $\ell_1$ or the reweighed $\ell_1$ norm for sparse regularization \cite{candes2008enhancing}.

Solving such optimization problems is usually done using projected gradient descent (GD).
Given the current point $x_t$ and the objective function $F$ to optimize, a GD step is taken toward the objective:
$x_{t+1} = x_t-\gamma\nabla F(x)$, for some stepsize, or learning rate, $\gamma>0$.
GD does not take into account the presence of  constraints, 
hence constraints are usually inserted in the objective using Lagrange multipliers, or using projection or proximal methods.
The projected GD algorithm is then $x_{t+1} = \alpha P_C\big(x_t-\gamma\nabla F(x)\big) + (1-\alpha)x_t$, with $P_C$ the projection or proximal operator.
Note that projecting onto the $\ell_1$ or reweighed $\ell_1$ norm ball is of linear-time complexity and is the common choice \cite{Perez19,perez22}.

In deep learning, exploiting the sparsity of neural networks has been a long-lasting topic.
\textit{Dropout} for instance is an early implementation of sparsity, whose goal
is to increase the robustness of the learned representation \cite{dropout1,dropout2,ouyang2022block}.
While dropout drastically improves the robustness of non-sparse neural networks, 
feature selection methods have proved more efficient to find robust and sparse models, leading to better accuracy.
Indeed, in recent years, numerous methods have been proposed in order to \textit{sparsify} the weights during the training phase \cite{Tar2018,Zho2016}.
For example, sparse iso-flops or similar methods aim at replacing dense layers with transformation to improve the representation capacity \cite{saxena2023sift,ma2021effective}.
Other methods generally do produce sparse weight matrices, but this sparsity, while helping the accuracy, was not memory or processing efficient. 
To address this issue, the group-LASSO Lagrangian approach  was proposed \cite{Yua}, in order to directly sparsify neurons without loss of performance \cite{Alv2016,Hua2018,Osw2016}. 
For every $p,q \in \RR$, the 
$\ell_{p,q}$
norm of a real matrix 
$X = [x_1\ \cdots\ x_m] \in \RR^{n\times m}$ with columns $x_j$ and elements $X_{i,j}$ 
is given by
\begin{equation}
  \|X\|_{p,q} := \bigg(\sum_{j=1}^m \|x_j\|_q^p\bigg)^{\frac{1}{p}},
\end{equation}
where the $\ell_q$ norm of the vector $x_j\in\mathbb{R}^n$ is 
\begin{equation} 
  \|x_i\|_{q} := \bigg(\sum_{i=1}^n |X_{i,j}|^q\bigg)^{\frac{1}{q}}.
\end{equation}
By extension, the $\ell_\infty$ norm of $x_j$ is
\begin{equation} 
  \|x_j\|_{\infty} := \max_{i=1,\ldots,n} |X_{i,j}|.
\end{equation}

The $\ell_1$ ball projection and its derivatives have been used 
to enforce sparsity everywhere in deep neural networks, including 
from fully-connected layers to self-attention layers \cite{cui2019fine}, 
and even as a replacement for the softmax activation \cite{laha2018controllable}.
Thus, more efficient projection algorithms 
have the potential to impact a large part of the deep-learning community.

The $\ell_{1,\infty}$ norm is 
of particular interest because, compared to other norms, it is able to set a whole set of columns to zero, 
instead of spreading zeros as done by the $\ell_1$ norm.
This makes it particularly interesting for machine learning applications, 
and this is why many projection algorithms have been proposed
\cite{quattoni2009efficient,chau2019efficient,chu2020semismooth,bejar2021fastest}.

In this paper, we introduce a new projection algorithm for the $\ell_{1,\infty}$ norm ball.
The worst-case time complexity of this algorithm is $\mathcal{O}\big(nm+J\log(nm)\big)$ for a matrix in $\RR^{n\times m}$.
$J$ is a term that tends to 0 when the sparsity is high, and to $ n \times m$ in the worst case.
Moreover, as shown in our experimental section, for some matrices,
when the sparsity hits 0 (i.e. no sparsity), the $J$ value is around 3\% of the matrix size, which implies an almost linear complexity $\mathcal{O}\big(nm\big)$.
While recent algorithms are either approximate or based on complex reformulations, like semismooth Newton-type methods,
the proposed algorithm is simple yet efficient.
As shown in the experimental section, it is faster than all existing algorithms in the presence of sparsity.

Moreover, we propose to incorporate the $\ell_{1,\infty}$ ball projection while training an autoencoder
to enforce feature selection and sparsity of the weights.
Sparsification appears in the encoder to primarily do feature selection due to our application in biology,
where only a very small part ($<2\%$) of the data is relevant.
As shown in our experimental section, this setting allows us to accurately extract a tiny set (around 50) of relevant features from around three thousand biomarkers.
Our experimental section is split in two parts.
First, we provide an empirical analysis of the projection algorithms onto the  $\ell_{1,\infty}$ ball.
This part shows the advantage of the proposed method, especially in the context of sparsity. Second, we apply our framework on two biological datasets. In biology, the number of features (RNA or proteins) is very
large. To make a diagnosis, only a reduced number
of features is required. The problem is to select informative features. We show the advantage of using the  $\ell_{1,\infty}$ norm as a regularizer instead of other projection methods.

\section{$\ell_{1,\infty}$ ball, simplex, and Projection}
The projection onto the $\ell_{1,\infty}$ ball has gained interest in the last years 
\cite{quattoni2009efficient,chau2019efficient,chu2020semismooth,bejar2021fastest}, due to 
its efficiency to enforce sparsity and,  most importantly, to often increase accuracy. 
In this section, we formulate the problem
and derive a near-linear algorithm for efficient sparse projection.

\subsection{Definitions}
Let $Y \in \RR^{n \times m}$ be a real matrix of dimensions $m\geq 1$, $n\geq 1$, 
with elements 
$Y_{i,j}$, $i=1,\ldots,n$, $j=1,\ldots,m$. 
The $\ell_{1,\infty}$ norm of $Y $ is:
\begin{equation}
    \normOneInf{Y} := \sum_{j=1}^m \max_{i=1,\ldots,n} |Y_{i,j}|.
\end{equation}
Given a radius $C\geq 0$, the goal is to project $Y$ onto the $\ell_{1,\infty}$ norm ball of radius $C$, denoted by:
\begin{equation}
\calB^C_{1,\infty}:=\left\{X\in\RR^{n \times m}\ : \ \normOneInf{X}\leq C\right\}.
\end{equation}
The projection $P_{\calB^C_{1,\infty}}$ onto $\calB^C_{1,\infty}$ is given by:
\begin{equation}
P_{\calB^C_{1,\infty}}:Y\mapsto \arg \min \limits_{X \in \calB^C_{1,\infty}}   \frac{1}{2} \|X-Y\|_\mathrm{F}^2, 
\end{equation}
where $\|\cdot\|_\mathrm{F}=\|\cdot\|_{2,2}$ is the Frobenius norm. 
This projection can be derived from the projection onto the solid simplex $\Delta^C_{1,\infty}$:
\begin{equation}
\Delta^C_{1,\infty}:=\left\{X\in\RR_+^{n \times m}\, \ :\ \normOneInf{X}\leq C \;\right\}\!,
\end{equation}
where $\RR_+$ is the set of nonnegative reals. 
Indeed, let the sign function be defined by sign$(x) := \{-1 $ if $ x<0; 0 $ if $ x = 0; 1 $ if $ x > 0\}$.
The projection of  $Y \in \RR^{n\times m}$ onto  $\calB^C_{1,\infty}$ is given by:
\begin{equation}
P_{\calB^C_{1,\infty}}(Y)=
 \text{sign}(Y) \odot P_{\Delta^C_{1,\infty}}(|Y|),	
\end{equation}
with $\odot$ the Hadamard, or elementwise, product and $|Y|$ the elementwise absolute value of $Y$.
Moreover, if $\normOneInf{Y}\leq C$, $P_{\calB^C_{1,\infty}}(Y)=Y$. Thus, in the following, we focus on the projection onto $\Delta^C_{1,\infty}$ of a matrix $Y$ with $\normOneInf{Y}> C$ and nonnegative elements.
This projection 
can be characterized using auxiliary variables $\mu_j$, $j=1,\ldots,m$,  as: 
\begin{eqnarray}\label{eq:prob:P}
P_{\Delta^C_{1,\infty}}
:&Y\mapsto\arg \min \limits_{X,\mu} &  \frac{1}{2} \sum_{i,j}(X_{i,j}-Y_{i,j})^2 \\
&\text{subject to }&  \forall i,j, \quad X_{i,j} \leq \mu_j  \label{eq:P1inf:leqmu}\\
& & \sum_{j=1}^m \mu_j = C                                        \label{eq:sum:C}\\
& & \forall i,j, \quad X_{i,j} \geq 0.   \label{eq:P1inf:lasteq}         
\end{eqnarray}

\subsection{Properties}

In the above reformulation, the objective is a direct expression of the squared distance.
The constraint (\ref{eq:P1inf:leqmu}) enforces an upper bound on the values of the $j$-th column of $X$.
The constraint  (\ref{eq:sum:C}) enforces that the sum of the maximum values is equal to the radius $C$.
The last constraint ensures non-negativity.
The Lagrangian objective of this problem is:
\begin{eqnarray*}
\mathcal{L}_{1,\infty}&:=&
\frac{1}{2} \sum_{i,j}(X_{i,j}-Y_{i,j})^2 + \sum_{i,j} \alpha_{i,j}(X_{i,j}-\mu_i)  \\
& &    + \theta(\sum_i \mu_i - C)- \sum_{i,j} \beta_{i,j}X_{i,j}. 
\end{eqnarray*}

\begin{lem}\label{lem:opt}
At the optimal solution of problem 
\eqref{eq:prob:P}--\eqref{eq:P1inf:lasteq}, 
there exists a constant $\theta \geq 0$ such that for every $j=1,\ldots,m$: either $\mu_j > 0$ and $\sum_i(Y_{i,j}-X_{i,j})=\theta$; or $\mu_j = 0$, $\sum_i Y_{i,j} \leq \theta$, and $\forall i=1,\ldots,n$, $X_{i,j}=0$.
\end{lem}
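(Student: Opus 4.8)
The plan is to write down the Karush–Kuhn–Tucker (KKT) conditions for the convex program \eqref{eq:prob:P}--\eqref{eq:P1inf:lasteq} and read off the stated dichotomy. The feasible set is nonempty, closed and convex, and the objective is strictly convex and coercive, so a unique minimizer exists; moreover the constraints are affine, so strong duality holds and the KKT conditions are both necessary and sufficient at the optimum. The Lagrangian $\mathcal{L}_{1,\infty}$ (with the sign-convention fixed so that the multipliers $\alpha_{i,j}\ge 0$ go with $X_{i,j}-\mu_j\le 0$, the multipliers $\beta_{i,j}\ge 0$ go with $-X_{i,j}\le 0$, and $\theta\in\RR$ is the multiplier of the equality $\sum_j\mu_j = C$) gives the stationarity equations: differentiating in $X_{i,j}$ yields $X_{i,j}-Y_{i,j}+\alpha_{i,j}-\beta_{i,j}=0$, and differentiating in $\mu_j$ yields $-\sum_{i}\alpha_{i,j}+\theta=0$, i.e. $\sum_i\alpha_{i,j}=\theta$ for every $j$. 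Together with complementary slackness $\alpha_{i,j}(X_{i,j}-\mu_j)=0$ and $\beta_{i,j}X_{i,j}=0$, and dual feasibility $\alpha_{i,j},\beta_{i,j}\ge 0$, this is the full system.

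First I would establish $\theta\ge 0$: since $\theta=\sum_i\alpha_{i,j}$ for any fixed $j$ and each $\alpha_{i,j}\ge 0$, we immediately get $\theta\ge 0$. Next, fix a column $j$ and split on whether $\mu_j>0$ or $\mu_j=0$ (note $\mu_j\ge 0$ is forced, since $0\le X_{i,j}\le\mu_j$). In the case $\mu_j>0$: I claim $X_{i,j}-Y_{i,j}=-\alpha_{i,j}$, because if $\beta_{i,j}>0$ then $X_{i,j}=0<\mu_j$, which forces $\alpha_{i,j}=0$ by complementary slackness, giving $X_{i,j}-Y_{i,j}=\beta_{i,j}>0$; but $X_{i,j}=0$ and $Y_{i,j}\ge 0$ give $X_{i,j}-Y_{i,j}\le 0$, a contradiction unless $Y_{i,j}=0$, in which case $\beta_{i,j}=0$ as well — so in all subcases $\beta_{i,j}$ can be taken $0$ when $\mu_j>0$ (alternatively, argue directly that $\alpha_{i,j}\beta_{i,j}=0$ always and handle the $\alpha_{i,j}=0$ sub-branch). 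Then summing $Y_{i,j}-X_{i,j}=\alpha_{i,j}$ over $i$ and using the stationarity identity $\sum_i\alpha_{i,j}=\theta$ yields $\sum_i(Y_{i,j}-X_{i,j})=\theta$, which is the first alternative.

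For the case $\mu_j=0$: constraint \eqref{eq:P1inf:leqmu} forces $X_{i,j}\le 0$, and combined with \eqref{eq:P1inf:lasteq} this gives $X_{i,j}=0$ for all $i$, which is the last assertion of the alternative. It remains to show $\sum_i Y_{i,j}\le\theta$. From stationarity, $\alpha_{i,j}=Y_{i,j}-X_{i,j}+\beta_{i,j}=Y_{i,j}+\beta_{i,j}\ge Y_{i,j}$, so summing over $i$ gives $\theta=\sum_i\alpha_{i,j}\ge\sum_i Y_{i,j}$, as required. Assembling the two cases finishes the proof.

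The only delicate point is the bookkeeping in the first case — making sure that in the branch $\mu_j>0$ one genuinely has $Y_{i,j}-X_{i,j}=\alpha_{i,j}$ rather than $Y_{i,j}-X_{i,j}=\alpha_{i,j}-\beta_{i,j}$ with a stray $\beta_{i,j}$. This is handled by the observation that $X_{i,j}<\mu_j$ whenever $X_{i,j}=0<\mu_j$, so the two inequality constraints on $(i,j)$ are never simultaneously active, forcing $\alpha_{i,j}\beta_{i,j}=0$; combined with $0\le X_{i,j}\le\mu_j$ and $Y_{i,j}\ge0$ one checks that the sign of $Y_{i,j}-X_{i,j}$ is consistent only with $\beta_{i,j}=0$ (or with both multipliers and $Y_{i,j}$ vanishing). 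Everything else is a direct substitution, so this is genuinely the crux.
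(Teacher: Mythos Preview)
Your proposal is correct and follows exactly the approach the paper indicates: the paper does not spell out its own proof but writes down the Lagrangian $\mathcal{L}_{1,\infty}$ and then states that the lemma ``is a direct application of the Kuhn--Tucker theorem,'' citing \cite{quattoni2009efficient} for details. Your argument is precisely that direct application --- stationarity in $X_{i,j}$ and $\mu_j$, complementary slackness, and the case split on $\mu_j>0$ versus $\mu_j=0$ --- and your handling of the one genuinely delicate point (ruling out a stray $\beta_{i,j}$ when $\mu_j>0$ via $\alpha_{i,j}\beta_{i,j}=0$ and the sign of $Y_{i,j}-X_{i,j}$) is sound.
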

The proof is given in \cite{quattoni2009efficient} and is
a direct application of the Kuhn--Tucker theorem \cite{hanson1981sufficiency}.
This lemma shows that a quantity $\theta$ is removed from all the columns of the matrix
whose sum is greater than $\theta$, otherwise the whole column is set to zero.

Let $P_{\Delta^\theta_{1}}$ be the projection onto 
$\Delta^\theta_{1}:=\left\{x\in\RR_+^{n}\, \ :\ \sum_{i=1}^n x_n\leq \theta \;\right\}$, 
the solid simplex of radius $\theta$.
\begin{propo}
Let $Y = [y_1\ \cdots\ y_m] \in \RR_+^{n,m}$ be a matrix such that $\normOneInf{Y}> C$. 
Then
\begin{equation} \label{eq:reform}
    P_{\Delta^C_{1,\infty}}(Y) = \big[y_1 - P_{\Delta^\theta_{1}}(y_1)\ \cdots\ y_m - P_{\Delta^\theta_{1}}(y_m)\big],
\end{equation}
with $\theta$ defined in Lemma~\ref{lem:opt}.
\end{propo}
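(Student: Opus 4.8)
The plan is to use Lemma~\ref{lem:opt} as the main lever: it produces one Lagrange multiplier $\theta\ge 0$ associated with the coupling constraint~\eqref{eq:sum:C} that is common to all columns, and once $\theta$ is frozen the minimization~\eqref{eq:prob:P}--\eqref{eq:P1inf:lasteq} separates column by column, so that it only remains to identify each optimal column $X_j$ with $y_j-P_{\Delta^\theta_{1}}(y_j)$. Before that I would record the standard closed form of the solid $\ell_1$-simplex projection: for $y\in\RR_+^n$, $P_{\Delta^\theta_{1}}(y)=y$ when $\normone{y}\le\theta$, while $P_{\Delta^\theta_{1}}(y)_i=(y_i-\tau)_+$ otherwise, with $\tau>0$ the unique scalar such that $\sum_i(y_i-\tau)_+=\theta$; uniqueness holds because $t\mapsto\sum_i(y_i-t)_+$ is continuous and strictly decreasing wherever it is positive. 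Consequently the residual $y-P_{\Delta^\theta_{1}}(y)$ is zero when $\normone{y}\le\theta$ and has entries $\min(y_i,\tau)$ otherwise. One should also observe that $\normOneInf{Y}>C$ forces $\theta>0$: if $\theta=0$, then Lemma~\ref{lem:opt} (together with the clipping identity derived below) makes the optimal $X$ equal to $Y$, which violates $\normOneInf{X}\le C<\normOneInf{Y}$.

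Next I would split on the dichotomy of Lemma~\ref{lem:opt}. If $\mu_j=0$, the lemma gives $X_{i,j}=0$ for all $i$ and $\normone{y_j}=\sum_i Y_{i,j}\le\theta$; the closed form above then gives $P_{\Delta^\theta_{1}}(y_j)=y_j$, so $y_j-P_{\Delta^\theta_{1}}(y_j)=0=X_j$. If $\mu_j>0$, I would first note that, with $\mu$ held at its optimal value, the objective of problem~\eqref{eq:prob:P}--\eqref{eq:P1inf:lasteq} restricted to column $j$ is $\frac{1}{2}\|X_j-y_j\|^2$ and the only constraints touching that column are $0\le X_{i,j}\le\mu_j$ from~\eqref{eq:P1inf:leqmu} and~\eqref{eq:P1inf:lasteq}; hence the optimal $X_j$ is the Euclidean projection of $y_j$ onto the box $[0,\mu_j]^n$, that is, $X_{i,j}=\min(Y_{i,j},\mu_j)$ (using $Y_{i,j}\ge 0$), so $Y_{i,j}-X_{i,j}=(Y_{i,j}-\mu_j)_+$. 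Substituting into the lemma's identity $\sum_i(Y_{i,j}-X_{i,j})=\theta$ gives $\sum_i(Y_{i,j}-\mu_j)_+=\theta$; combined with $\theta>0$ this forces $\normone{y_j}>\theta$ (equality would imply $\mu_j=0$). By the uniqueness recalled above, $\mu_j$ must be the threshold $\tau$ of $P_{\Delta^\theta_{1}}(y_j)$, and therefore $X_{i,j}=\min(Y_{i,j},\mu_j)=Y_{i,j}-(Y_{i,j}-\mu_j)_+=\big(y_j-P_{\Delta^\theta_{1}}(y_j)\big)_i$. Collecting the two cases for $j=1,\ldots,m$ yields the identity~\eqref{eq:reform}.

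I expect the main obstacle to be the bookkeeping around the active columns: justifying the restriction argument that turns the optimal column into a box projection (equivalently, reading off $X_{i,j}=\min(Y_{i,j},\mu_j)$ from the stationarity and complementary-slackness conditions of $\mathcal{L}_{1,\infty}$ by tracking the signs of $\alpha_{i,j},\beta_{i,j}$), and then matching the optimal $\mu_j$ with the simplex threshold $\tau$, which relies on the strict monotonicity of $t\mapsto\sum_i(Y_{i,j}-t)_+$ and on disposing of the degenerate sub-cases $\theta=0$ and $\normone{y_j}=\theta$. The remaining ingredients --- feasibility of the assembled matrix (automatic, since it equals the true minimizer) and the passage from per-column equalities to the matrix identity --- are routine.
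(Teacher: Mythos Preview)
Your proposal is correct and follows essentially the same route as the paper's proof: split columns according to the dichotomy of Lemma~\ref{lem:opt}, use the clipping identity $X_{i,j}=\min(Y_{i,j},\mu_j)$ on active columns, and identify $\mu_j$ with the simplex threshold via $\sum_i(Y_{i,j}-\mu_j)_+=\theta$. Your version is in fact more careful than the paper's, which asserts $X_{i,j}=\min(Y_{i,j},\mu_j)$ and the threshold match without justification and does not discuss $\theta>0$ or the degenerate sub-cases.
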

\begin{proof} 
Consider a column $y_j$ whose sum of elements is less than or equal to $\theta$. Then, 
$y_j = P_{\Delta^\theta_{1}}(y_j)$ so that
$y_j - P_{\Delta^\theta_{1}}(y_j)$ is the zero vector.
Now consider a column $y_j$ whose sum of elements is greater than $\theta$. We have, for every $i=1,\ldots,n$, $X_{i,j}=\min(Y_{i,j},\mu_j)$. Also, by properties of the 
projection onto $\Delta^\theta_{1}$,
$z_j:=P_{\Delta^\theta_{1}}(y_j)$ satisfies \cite{condat,Perez19}, for every $i=1,\ldots,n$, $Z_{i,j}=\max(Y_{i,j}-\mu_j,0)$, so that $X_{i,j}=\min(Y_{i,j},\mu_j)=Y_{i,j}-Z_{i,j}$. 
Hence, $x_j=y_j-z_j$. Also, $\sum_i(Y_{i,j}-X_{i,j})=\sum_i(Y_{i,j} - \max(Y_{i,j}-\mu_j,0))) = \sum_i(\max(Y_{i,j}-\mu_i,0)) 
= \sum_i Z_{i,j}=
\theta$.
\end{proof}

Thus, if $\theta$ was known, the projection onto $\Delta^C_{1,\infty}$ would be easily done using $m$ projections onto $\Delta^\theta_{1}$. Thus, the difficulty  essentially lies in finding $\theta$.

\subsection{Relation between the $\ell_{1,\infty}$ and $\ell_{\infty,1}$ norms}

As detailed in Section 2 of \cite{bejar2021fastest}, the projection onto the $\ell_{1,\infty}$ norm ball can be used to compute the proximity operator of the dual norm, which is the $\ell_{\infty,1}$ norm: 
\begin{equation}
\|Y\|_{\infty,1}:=\max_{j=1,\ldots,m}
\sum_{i=1}^n  |Y_{i,j}|.
\end{equation}
Given a matrix $Y\in\mathbb{R}^{n\times m}$ and a regularization parameter $C>0$, the proximity operator of $C\|\cdot\|_{\infty,1}$ is the mapping
\cite{moreau62}
\begin{equation}
\mathrm{prox}_{C\|\cdot\|_{\infty,1}}:Y\mapsto \arg \min \limits_{X\in\mathbb{R}^{n\times m}}   \frac{1}{2} \|X-Y\|_\mathrm{F}^2 + C\|X\|_{\infty,1}.\label{eqprox}
\end{equation}
Thus, computing this proximity operator amounts to solving the optimization problem in \eqref{eqprox}.
This operator can be used as a subroutine in proximal splitting algorithms \cite{con23} to solve more complicated problems involving the $\ell_{\infty,1}$ norm.

Then, by virtue of the Moreau identity \cite{bau17}, computing this proximity operator is equivalent to projecting onto the $\ell_{1,\infty}$ norm ball:
\begin{equation}
\mathrm{prox}_{C\|\cdot\|_{\infty,1}}(Y)=Y - P_{\calB^C_{1,\infty}}(Y).
\end{equation}
Hence, our projection algorithm can also be used in problems involving the $\ell_{\infty,1}$ norm.

\section{Projection algorithms}

\subsection{Algorithmic mechanisms}
Let $Y_{j}^{\mu_j} = \{i:Y_{i,j} \geq \mu_j\}$ the set of locations from column $j$ where the values are greater than $\mu_j$.
From the definition of the $\ell_1$ simplex we can extract:

\begin{equation} \label{eq:mui}
    \mu_j = \frac{\sum_{i \in Y_{j}^{\mu_j}} Y_{i,j} - \theta}{|Y_{j}^{\mu_j}|},
\end{equation}
with $|Y_{j}^{\mu_j}|$ the cardinality of the set.
Let $\mathbf{a}$ denote the set of active columns ($a_j = 1 \implies \mu_j>0$).
Let $A = \{i,...,j\}$ the set of locations of ones in $\mathbf{a}$.
Using Equation (\ref{eq:mui}) and Equation (\ref{eq:sum:C}) we have

\begin{equation} \label{eq:C}
    C = \frac{\sum_{j \in A} \sum_{i \in Y_{j}^{\mu_j}} Y_{i,j} - \theta}{|Y_{j}^{\mu_j}|}.
\end{equation}
Finally, from equation (\ref{eq:mui}) and equation (\ref{eq:C}), we have
\begin{equation} \label{eq:theta}
    \theta = \frac{\sum_{j\in A} \sum_{i \in Y_{j}^{\mu_j}} \frac{Y_{i,j}}{|Y_{j}^{\mu_j}|} - C}{
    \sum_{j\in A} \frac{1}{|Y_{j}^{\mu_j}|}}.
\end{equation}
Let $Z$ be the matrix where $Z_{i,j}$ is the $i$th greatest value of column $j$ of $Y$.
Let $S$ be the matrix where $S_{i,j}$ is the cumulative sum of the $i$ largest values of column $j$ for $Y$,
$S_{i,j}=\sum_{k=1}^i Z_{k,j}$. 
Let $\theta_t$ be the current approximation of $\theta$.
Consider the addition of an element to $\theta_t$ and its evolution with respect to its previous value.
Let $\theta_{t+1}$ be the new value after another element of $Y$ is added to $\theta_t$.

\begin{propo} \label{propo:addToTheta}
    Adding element $Z_{i+1,j}$ to $\theta_t$ such that $\theta_t >  i Z_{i+1,j} - S_{i,j}$ implies $\theta_{t+1} \geq \theta_t$.
\end{propo}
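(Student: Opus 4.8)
The plan is to make the dependence of the running estimate $\theta_t$ on the set of currently selected entries fully explicit, perform the one-entry update in closed form, and read off the sign of $\theta_{t+1}-\theta_t$. Concretely, I would first rewrite Equation~\eqref{eq:theta} in compact form: letting $k_{j'} := |Y_{j'}^{\mu_{j'}}|$ denote the number of entries currently selected in column $j'$ and using $\sum_{i'\in Y_{j'}^{\mu_{j'}}}Y_{i',j'}=S_{k_{j'},j'}$, one has $\theta_t=(N_t-C)/D_t$ with $N_t:=\sum_{j'\in A}S_{k_{j'},j'}/k_{j'}$ and $D_t:=\sum_{j'\in A}1/k_{j'}$. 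Since $A\ne\varnothing$ and each $k_{j'}\ge1$, the denominator satisfies $D_t>0$.

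Next I would describe the effect of ``adding entry $Z_{i+1,j}$ to $\theta_t$'' on the pair $(N,D)$: only column $j$ is touched, its count passing from $i$ to $i+1$, so $N$ changes by $\Delta N:=\frac{S_{i+1,j}}{i+1}-\frac{S_{i,j}}{i}$ and $D$ by $\Delta D:=\frac{1}{i+1}-\frac{1}{i}=-\frac{1}{i(i+1)}$. Writing $S_{i+1,j}=S_{i,j}+Z_{i+1,j}$ and reducing over the common denominator $i(i+1)$ gives $\Delta N=\frac{iZ_{i+1,j}-S_{i,j}}{i(i+1)}$.

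Then I would compute the increment directly. Using the identity $N_t-C=\theta_t D_t$, cross-multiplication cancels the bulky terms:
\[
\theta_{t+1}-\theta_t=\frac{N_t+\Delta N-C}{D_t+\Delta D}-\theta_t=\frac{\Delta N-\theta_t\,\Delta D}{D_{t+1}},
\]
and substituting $\Delta N$ and $\Delta D$ yields
\[
\theta_{t+1}-\theta_t=\frac{\theta_t+iZ_{i+1,j}-S_{i,j}}{i(i+1)\,D_{t+1}}.
\]
Because $D_{t+1}=D_t+\Delta D=\sum_{j'\ne j}1/k_{j'}+1/(i+1)$ is again a sum of positive reciprocals, the factor $i(i+1)D_{t+1}$ is strictly positive, so $\theta_{t+1}-\theta_t$ has the sign of its numerator; the hypothesis on $\theta_t$ makes this numerator nonnegative, giving $\theta_{t+1}\ge\theta_t$.

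The only genuine work is the algebraic reduction of $\Delta N$ and of the difference quotient; the subtle point — the one I would state carefully — is that $\theta_t$ must be \emph{exactly} the ratio $(N_t-C)/D_t$ associated with the current configuration, since it is precisely the identity $N_t-C=\theta_t D_t$ that collapses the difference into a single fraction and exposes the sign. A minor loose end is the case $i=0$, i.e.\ the first activation of a previously-zero column: the same computation goes through with $S_{0,j}=0$ and added contribution $Z_{1,j}$, but it should be handled separately since the formula for $\Delta N$ above was derived under $i\ge1$.
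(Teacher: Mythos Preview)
Your argument is essentially the paper's: both write $\theta_t=(N_t-C)/D_t$, perturb only the column-$j$ contribution when its count goes from $i$ to $i+1$, and reduce $\theta_{t+1}-\theta_t$ to a single fraction whose denominator is a positive sum of reciprocals and whose numerator is $\theta_t+iZ_{i+1,j}-S_{i,j}$; your bookkeeping via $(\Delta N,\Delta D)$ and the identity $N_t-C=\theta_t D_t$ is in fact tidier than the appendix derivation. One caveat, present in the paper as well: the numerator you obtain is nonnegative precisely when $\theta_t\ge S_{i,j}-iZ_{i+1,j}$, which is \emph{not} the literal hypothesis $\theta_t>iZ_{i+1,j}-S_{i,j}$ stated in the proposition, so your last sentence (``the hypothesis on $\theta_t$ makes this numerator nonnegative'') does not follow as written; this appears to be a sign slip in the statement rather than a defect in the proof strategy, but you should flag it rather than silently invoke it.
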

\begin{propo}\label{propo:removeFromTheta}
    Removing column $j$ from $\theta_t$ if $\sum_i Y_{i,j} \leq \theta_t$ implies $\theta_{t+1} \geq \theta_t$ 
\end{propo}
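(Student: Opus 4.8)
Proposition~\ref{propo:removeFromTheta} is about a single incremental step, so the plan is to read ``removing column $j$ from $\theta_t$'' as a purely algebraic update of the closed form in Equation~\eqref{eq:theta}. For each currently active column $j'$, let $k_{j'}:=|Y_{j'}^{\mu_{j'}}|\ge 1$ be the number of entries currently selected; by Equation~\eqref{eq:mui}, column $j'$ contributes $v_{j'}:=\sum_{i\in Y_{j'}^{\mu_{j'}}}Y_{i,j'}/k_{j'}=S_{k_{j'},j'}/k_{j'}$ to the numerator and $w_{j'}:=1/k_{j'}$ to the denominator of Equation~\eqref{eq:theta}. Writing $N_t=\sum_{j'\in A}v_{j'}$ and $D_t=\sum_{j'\in A}w_{j'}$, we have $\theta_t=(N_t-C)/D_t$, and deactivating column $j$ replaces $N_t$ by $N_t-v_j$ and $D_t$ by $D_t-w_j$, so $\theta_{t+1}=(N_t-v_j-C)/(D_t-w_j)$, which is well defined since at least one other column stays active, hence $D_t-w_j>0$.

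First I would derive the sharp criterion: $\theta_{t+1}\ge\theta_t$ if and only if $\theta_t\ge S_{k_j,j}$, where $S_{k_j,j}=\sum_{i\in Y_j^{\mu_j}}Y_{i,j}$ is the sum of the entries currently selected in column $j$. Indeed, since $\theta_tD_t=N_t-C$, one can rewrite $\theta_{t+1}=(\theta_tD_t-v_j)/(D_t-w_j)$; because $D_t-w_j>0$, the inequality $\theta_{t+1}\ge\theta_t$ is equivalent to $\theta_tD_t-v_j\ge\theta_t(D_t-w_j)$, that is to $\theta_tw_j\ge v_j$, that is to $\theta_t\ge v_j/w_j=S_{k_j,j}$.

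Next I would bridge the hypothesis to this criterion. Because the entries of $Y$ are nonnegative, the per-column cumulative sums $S_{i,j}$ are non-decreasing in $i$, so $S_{k_j,j}\le S_{n,j}=\sum_iY_{i,j}$ (with equality if $k_j=n$). Hence the hypothesis $\sum_iY_{i,j}\le\theta_t$ yields $\theta_t\ge\sum_iY_{i,j}\ge S_{k_j,j}$, which is exactly the criterion obtained above; therefore $\theta_{t+1}\ge\theta_t$.

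There is no genuine difficulty here, only bookkeeping to keep straight: one must correctly read off column $j$'s contributions $v_j$ and $w_j$ to the numerator and denominator of Equation~\eqref{eq:theta} so that the update is exactly $(N_t-v_j-C)/(D_t-w_j)$ and nothing more, keep $D_t-w_j>0$ by assuming another column stays active, and note that the stated test $\sum_iY_{i,j}\le\theta_t$ is in general slightly stronger than the tight threshold $\theta_t\ge S_{k_j,j}$ that the conclusion actually requires.
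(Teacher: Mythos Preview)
Your argument is correct and follows the same route as the paper's appendix: both manipulate the closed form in Equation~\eqref{eq:theta} to write $\theta_{t+1}-\theta_t$ as a quantity with positive denominator whose sign is governed by $\theta_t - S_{k_j,j}$, and then invoke the hypothesis. Your write-up is in fact tidier than the paper's, and you make explicit the bridge $S_{k_j,j}\le S_{n,j}=\sum_i Y_{i,j}$ from the full-column-sum hypothesis to the sharp partial-sum criterion, which the paper leaves implicit.
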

%

Using these two propositions, whose proofs are in the Appendix, allows to define a first \textit{naive} algorithm.
Algorithm~\ref{alg:ProjC:naive} directly uses  $\ell_1$ projection to perform the $\ell_{1,\infty}$ projection.
This algorithms updates $\theta_t$ until no further modifications are possible.
At line \ref{alg:ProjC:naive:remove} it removes columns with respect to proposition \ref{propo:removeFromTheta}.
At line \ref{alg:ProjC:naive:gather} it gathers all the elements of a column that satisfy proposition \ref{propo:addToTheta}.
This algorithm, despite its simplicity, has been only recently proposed \cite{bejar2021fastest}.
The authors proposed two \textit{efficient} implementations preventing
the recomputation the $\ell_1$ projection from scratch each time.
Nevertheless, its worst-case complexity is $O(n^2m P)$ with $P$ the complexity of projection onto the  $\ell_1$ simplex.

\begin{algorithm}[ht]
\caption{Projection naive \cite{bejar2021fastest}}\label{alg:ProjC:naive}
\KwData{$Y \in \RR^{n,m}_+, C > 0$}
\KwResult{$X = P_{\ell_{1,\infty}}(Y)$}
\SetAlgoLined
$\mathbf{a} \gets $set$(\{1,\dots,m\})$ \\
$\theta \gets \frac{\sum_j \max y_j-c}{m}$ \\
\While{$\theta$ changed}{
    \For{$j \in \mathbf{a}$}{
        \If{$\normone{y_{j}} < \theta$}{ \label{alg:ProjC:naive:remove}
            $\mathbf{a} \gets \mathbf{a}\backslash \{j\}$ \\
            continue
        }
        \textbf{$x_j$} $\gets  P_{1}^\theta(y_j)$ \\
        $S_j \gets \text{set(}\{ i |$\textbf{$x_{i,j}$}$  > 0\})$ \label{alg:ProjC:naive:gather}\\
    }
    $\theta \gets \frac{\sum_{j\in\mathbf{a}} \frac{\sum{i \in S_j} Y_{i,j}}{|S_j|} - C}
    {\sum_{j\in \mathbf{a}}\frac{1}{|S_j|}}$ \\
}
$\forall j ,\mu_j \gets \max(0,\frac{\sum{i \in S_j} Y_{i,j} - \theta}{|S_j|})$ \\
$\forall i,j, X_{i,j} \gets $ min($Y_{i,j},\mu_j$)
\end{algorithm}

\paragraph{Total order}
Proposition \ref{propo:addToTheta} can be used to define a total order of the values of matrix $Y$.
Let $R=\{iZ_{i+1,j}-S_{i,j} |\forall i, \forall j\}$ be the residual matrix of $Y$.
Let $P$ be a non-increasing permutation of $R$.
\begin{lem} \label{lem:totalOrder}
    For all $i,j \in [1,nm]$ such that $i<j$, if $R_{P_i}$ cannot be added to $\theta_t$ with 
respect to proposition \ref{propo:addToTheta}, then $R_{P_j}$ cannot be added too.
\end{lem}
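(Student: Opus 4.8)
The plan is to reduce the statement to a one-line monotonicity argument: for a fixed $\theta_t$, ``can be added to $\theta_t$'' is a threshold test on the scalar value $R_{P_i}$, and since $P$ lists these values in non-increasing order, failing the test once forces failing it from then on.

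First I would upgrade the implication of Proposition~\ref{propo:addToTheta} to an equivalence. Incorporating one more element into the active set of a column $c$ whose active set currently has size $k$ amounts, in \eqref{eq:theta}, to replacing that column's contribution $\big(S_{k,c}/k,\,1/k\big)$ to the numerator and to the denominator by $\big(S_{k+1,c}/(k+1),\,1/(k+1)\big)$; the corresponding entry of $R$ is $R_{P_i}=R_{k,c}=kZ_{k+1,c}-S_{k,c}$. Writing $\theta_t=(N_t-C)/D_t$ as in \eqref{eq:theta}, this substitution changes $N_t$ by $R_{P_i}/\big(k(k+1)\big)$ and $D_t$ by $-1/\big(k(k+1)\big)$, whence $\theta_{t+1}\ge\theta_t$ if and only if $R_{P_i}\ge-\theta_t$, i.e. if and only if $\theta_t\ge S_{k,c}-kZ_{k+1,c}$. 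Thus, at a fixed $\theta_t$, the element carrying the value $R_{P_i}$ can be added exactly when $R_{P_i}\ge-\theta_t$, so ``$R_{P_i}$ cannot be added to $\theta_t$'' is exactly the condition $R_{P_i}<-\theta_t$ --- a condition on $R_{P_i}$ alone, preserved when the residual value is made smaller.

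Second, I would invoke the definition of $P$. Since $P$ is a non-increasing permutation of the entries of $R$, $i<j$ implies $R_{P_i}\ge R_{P_j}$. Hence, if $R_{P_i}$ cannot be added to $\theta_t$, then $R_{P_j}\le R_{P_i}<-\theta_t$, so $R_{P_j}$ cannot be added either, which is the claim. For coherence with the fact that a column is filled from its largest entry downward, I would also record that within a fixed column $c$ the map $k\mapsto R_{k,c}=kZ_{k+1,c}-S_{k,c}$ is itself non-increasing, since $R_{k+1,c}-R_{k,c}=(k+1)(Z_{k+2,c}-Z_{k+1,c})\le 0$ by the sorting of $Z_{\cdot,c}$; therefore the global order $P$ traverses each column's entries in the order in which they would be processed, the partial sum $S_{k,c}$ inside $R_{P_i}$ is always read at a consistent state, and ``addability'' genuinely depends only on the pair $(\theta_t,R_{P_i})$. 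Ties in $R$ are immaterial, since any non-increasing ordering is an admissible $P$.

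The only genuinely delicate step is the sign bookkeeping in the first part: making sure that ``cannot be added'' means $R_{P_i}$ falls \emph{below} the $\theta_t$-dependent threshold (here $-\theta_t$), not above it --- equivalently, that adding $Z_{k+1,c}$ keeps $\theta$ non-decreasing precisely when $\theta_t\ge S_{k,c}-kZ_{k+1,c}$. Once that is pinned down, the lemma is just the chain $R_{P_j}\le R_{P_i}<-\theta_t$, and no further use of the structure of the projection is needed.
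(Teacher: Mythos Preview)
The paper does not give an explicit proof of this lemma; it states it and immediately moves on, treating it as an evident consequence of Proposition~\ref{propo:addToTheta} together with the definition of $P$ as a non-increasing sort of $R$. Your argument is exactly the intended one: ``addability'' at a fixed $\theta_t$ is a threshold test on the single scalar $R_{P_i}$, so failing it is downward-closed along the sorted order. Your additional observation that $k\mapsto R_{k,c}$ is non-increasing within each column is a useful sanity check (it guarantees that the global sort $P$ visits each column's entries in the order the algorithm would actually process them), though it is not formally needed for the lemma as stated.

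One point worth flagging: your re-derivation of the addability condition gives $\theta_{t+1}\ge\theta_t\iff R_{P_i}\ge-\theta_t$, i.e.\ $\theta_t\ge S_{k,c}-kZ_{k+1,c}$, whereas the paper's Proposition~\ref{propo:addToTheta} and equation~\eqref{eq:ProjC:c} write the condition as $\theta_t> kZ_{k+1,c}-S_{k,c}$. Since every $R_{k,c}=kZ_{k+1,c}-S_{k,c}\le 0$ (the newly added entry is no larger than the previous ones), the paper's literal inequality is vacuously satisfied and, taken at face value, would make the lemma's implication run the wrong way along $P$. Your version is the correct one, and you were right to re-derive the equivalence rather than quote the proposition; the paper's own appendix derivation in fact supports your sign, so this looks like a typo in the proposition's statement rather than a flaw in your proof.
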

This implies that once $P$ is known, iterating over $P$ until proposition \ref{propo:addToTheta}
can no longer be satisfied is enough to find all the elements that satisfy it.
Here, proposition \ref{propo:removeFromTheta} is ignored, but it can be incorporated into $P$.
Let matrix $R'\in \RR^{n+1,m}$ equal to $R$ for the $n$ first rows.
The additional row filled with $S_{n,j}$ for all $j$.
Let $P'$ be a non-increasing permutation of $R'$.
Lemma \ref{lem:totalOrder} can be directly extended to $P'$.

\paragraph{Build $P'$ then find $\theta$ \cite{quattoni2009efficient}} \label{sec:quattoni}
One of the first published projection algorithms starts by computing $P'$ and then
iterates over the elements of $P'$ until $R'<\theta_t$ \cite{quattoni2009efficient}.
Despite a different presentation, the processing of the residual matrix and its sorting is the same.
Its complexity is $O(nm + nm \log(nm))$, a large part of it being in the preprocessing of $P'$.
The performance of this algorithm is discussed Section \ref{secex1}.

\subsection{Proposed Projection Algorithm}
We propose here to follow a logical path to decrease the time complexity of the total order algorithm \cite{quattoni2009efficient}.
The complexity of computing $Z$ is $O(nm \log(n))$ as each of the columns have to be sorted.
The complexity of computing $P'$ is $O(nm \log(nm))$ as the complete matrix $R'$ has to be sorted.
Then, the final step of finding the first element such that none of the proposition allows to add an element to the computation is linear $O(nm)$ .
More precisely, let $K$ be the index in $P'$ where the algorithm stops.
It corresponds roughly to the number of modified values by the projection, either set to zero, or bounded.
The final step of \cite{quattoni2009efficient} is in fact of complexity $O(K)$,
which implies that the global complexity is $O(nm + nm \log(n) + nm \log(nm) + K)$.
In the next paragraphs, we will decrease the complexity step by step, using algorithmic improvements.
The complete algorithm is then given.

\begin{itemize}
\item \textbf{From $O(nm + nm \log(n) + nm \log(nm) + K)$ to $O(nm + nm\log(n) + K \log(nm))$.}
Projecting vectors onto the $\ell_1$ ball is a well-studied topic \cite{duchi,condat,Perez19}.
One of the first fast algorithms proposed to use a \textit{heap} instead of sorting the complete vector \cite{van2009probing}.
We propose to reuse the same idea.
Given a vector in $\RR^n$, the creation of the heap (i.e. \textit{Heapify}) time complexity is $O(n)$, 
the \textit{Top} operation complexity is $O(1)$, 
the \textit{Pop} operation and \textit{Insert} operation complexity is $O(\log(n))$.
Processing $P'$ requires sorting a vector of size $nm$.
We propose to use a heap to store $P'$ and to extract elements one by one until $\theta$ is found.
As only $K$ iterations over $P'$ are required, the total complexity of this part of the algorithm is 
$O(nm + K\log(nm))$ instead of $O(nm\log(nm) + K)$.
Using a heap for the processing of $P'$ leads to a global worst-case time complexity of 
$O(nm + nm\log(n) + K \log(nm))$.

\item \textbf{From $O(nm + nm\log(n) + K \log(nm))$ to $O(nm + K\log(nm))$.}
At any moment of the algorithm, only the next largest value of a given column might be picked up by $P'$.
This implies that the heap $P'$ can contain only $m$ elements at worst, instead of $nm$ elements.
The counterpart is that each time an element of $P'$ is \textit{popped},
the next greatest value of the column that just got popped must be inserted into the heap.
If $Z$ has been processed, then it is easy to get the next greatest element, but processing $Z$ is costly.
We propose to have one heap per column of $Y$, and each time the next greatest value of the column
is required, then the column's heap is \textit{popped}.
Using a heap for the processing of $P'$ and one heap per column instead of sorting 
leads to a global worst-case complexity of $O(nm + K \log(n) + K \log(m)) = O(nm + K\log(nm))$.

\item \textbf{From $O(nm + K\log(nm)$ to $O(nm + J\log(nm))$.}
The last and most important remark comes from the following point: 
Usually, the projection onto the $\ell_{1,\infty}$ ball is applied to enforce sparsity, 
as in our experimental section where the best accuracy was around 99 percent of sparsity.
In such case, most columns will be zeroed, and many values will be bounded in the remaining columns.
Such a remark implies that $K \approx nm$, which implies that there is almost no gain in complexity
from using all the proposed improvements.
Let $J = nm-K$ be roughly the number of non-modified values of the projected matrix.
As $K$ tends to $nm$, $J$ tends to 0 and vice-versa.

We propose to reverse the iteration over $P'$.
Instead of starting from the beginning and looking for the first value smaller than $\theta$, 
We start from the end of $P'$ and look for the first value greater than $\theta$.
This value is the last value added by proposition \ref{propo:addToTheta} or the last column that need to be removed with respect to proposition \ref{propo:removeFromTheta}.
The worst-case time complexity of this algorithm is $O(nm + J\log(nm))$.
\end{itemize}

\subsection{Implementation} 
A possible implementation is given in Algorithm~\ref{alg:sortOptNP1HeapInv}.
Function UpdateTheta() is $\theta \gets \frac{\sum_j \frac{a_j S_{j}}{k_j} - C}{\sum_j \frac{a_j}{k_j}}$.
First, at line \ref{algo:prop:heapifyG}, the global heap is created.
This heap contains $m$ elements, one for each column.
For each element, two values are given, the first one is the column index, the second one is the sorting key.
The initial sorting key is given by the sum of the elements of a column, this is because we are reversing the total order $P'$.
At line \ref{algo:prop:heapifyR}, if it is the first time that the column is encountered,
it is heapified as it will start being used by the global heap.
Putting the heapify here and not at the beginning is done to spare the time used to heapify the zeroed columns.
Then, the total sum of the elements of the columns is added to the current value of $\theta$.
If the current value of $\theta$ is already dominating the column, then the threshold has been found.
Otherwise, at line \ref{algo:prop:elseElement}, the current element is tested to check if it can be added to the
current approximation of $\theta$.
As shown in our experimental section, this new algorithm is faster 
compared to all other methods for sparse projections, and is the first near-linear method for high sparsity.

\begin{algorithm}[ht!]
\caption{Projection Inverse Total Order}\label{alg:sortOptNP1HeapInv}
\KwData{$Y \in \RR^{n,m}_+, C > 0$}
\KwResult{$X = P_{\ell_{1,\infty}}(Y)$}
\SetAlgoLined
$S \gets (\sum_i y_{i,1},\dots,\sum_i y_{i,m})$ \\
$P \gets $Heapify(($1:-S_1,\dots,m:-S_m)$, global, increasing)\label{algo:prop:heapifyG}\\
$\mathbf{k} \gets ones(m,1) \odot (n+1)$;
$\quad \mathbf{a} \gets zeros(m,1)$;\\
$\theta \gets 0$ \\
\While{$\theta$ changed}{
    \While{NotEmpty($P$)}{ 
        $j \gets$ Top$(P)$; $i \gets k_j$ \\
        $\mathbf{k_j} \gets \mathbf{k_j}-1$\\
        \eIf{$i = n+1$}{ \label{algo:prop:heapifyR}
            $\mathbf{a_j} \gets 1$ ;\quad 
            UpdateTheta() \\
            \If{$\normone{y_{j}} < \theta$}{
                $\mathbf{a_j} \gets 0$ ;\quad 
                UpdateTheta() \\
                Break
            }
            $X_j \gets $Heapify($Y_j$, increasing) \\
        }{ \label{algo:prop:elseElement}
            $S_{j} \gets S_{j} - $Top($X_j$)\\
            UpdateTheta()\\
            \If{$\frac{S_{j} - \theta}{k_j} <$ $Y_{i,j}$}{
                $\mathbf{k_j} \gets \mathbf{k_j}+1$\\
                $S_{j} \gets S_{j} + $Top($X_j$)\\
                UpdateTheta() \\
                Break
            }
        }            
        UpdateTop($P$,$k_j$Top($X_j$) - $S_{j}$); 
        Pop($X_j$)\\ 
        
    }
}
$\forall i,j, X_{i,j} \gets $ min($Y_{i,j}, \max(0,\frac{S_{j} - \theta}{k_j})$)
\end{algorithm}

\paragraph{columns eliminations} 
Performances of \cite{bejar2021fastest} are strongly dependent on a $O(nm+m\log(m))$
preprocess that tries to remove rows that provably will be set to zero.
In the proposed algorithm, there is no need to apply this algorithm as our algorithm ignores such rows by design.
Indeed, as the algorithm works backward, it never reaches rows that are dominated by $\theta$. 
In the worst case, it ends on a dominated row, and will directly discard it.

\section{Projection Experiments}\label{secex1}
This section presents experimental results of the projection operation alone.
The goal of such experiments is to highlight the advantages and drawbacks of the 
proposed and known algorithms.
We compared the proposed method against
\textit{Chu et al.}\cite{chu2020semismooth} which uses a semi-smooth Newton algorithm for the projection.
Then \textit{Quattoni et al.} \cite{quattoni2009efficient}, 
whose algorithm corresponds to the total order defined in section \ref{sec:quattoni}.
Finally, \textit{Bejar et al.} \cite{bejar2021fastest} whose algorithm starts by removing columns that we know will be set to zero,
and then applies Algorithm \ref{alg:ProjC:naive}.
All the code used in this experiment is the code generously provided by the authors of the respective algorithms. 
Only \textit{Chu et al.} and \textit{Bejar et al.} compete in terms of performance against the proposed method. 
All other methods usually take an order of magnitude more times,
hence are not present in most of our figures and tables.
Note that such a result is coherent with already published papers \cite{chu2020semismooth,bejar2021fastest}.
The complete code of these experiments can be found 
online\footnote{\url{https://github.com/memo-p/projection}}.
The code used to implement the proposed method is directly using the standard library of C++ for heaps and vectors.
The experiments were run on an \textit{AMD Ryzen 9 5900X 12-Core Processor 3.70 GHz} desktop machine having 32 GB of memory. 
No parallelism was allowed.

The goal of the projection onto the $\ell_{1,\infty}$ ball is usually to enhance sparsity.
Our first experiment investigates the correlation between the radius $C$ and the induced sparsity, 
and most importantly the running time of the algorithms.
The size of the matrices is 1000x1000, values between 0 and 1 uniformly sampled
and the radius are in $[10^-3,8]$.

\begin{figure}[!h]
    \centering
    \includegraphics[width=0.40\textwidth]{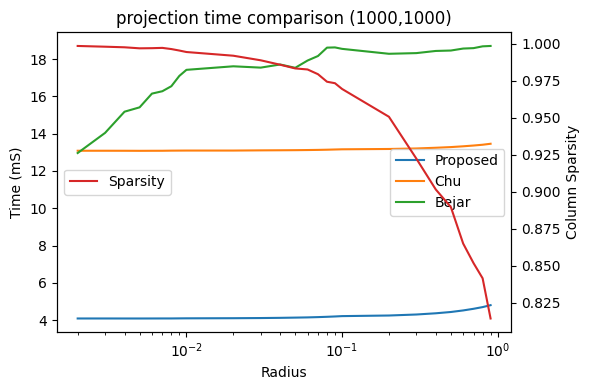}
    \includegraphics[width=0.40\textwidth]{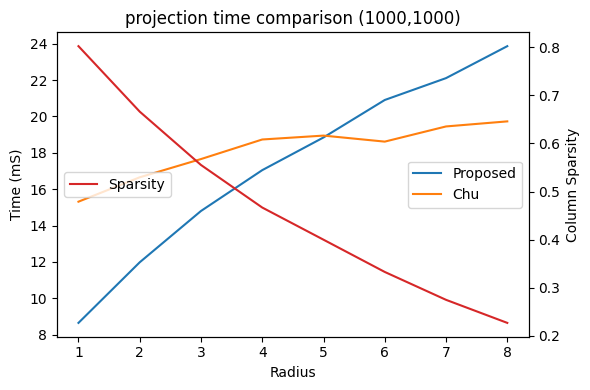}
    \caption{Impact of the radius on the sparsity of the matrix. Comparison of the projection times.}
    \label{fig:xp:Sparsity}
\end{figure}

Figure~\ref{fig:xp:Sparsity} shows that the sparsity decreases exponentially as the radius is increasing.
Moreover, we can see that the proposed algorithm is faster than the best existing methods when the sparsity is at least 40\%.
It is not surprising since the complexity of our method tends to linear when the sparsity is high.
As we can see, when less sparsity is present, the cost of using multiple heaps starts to slow down the algorithm. 
The same kind of results appears when the size of the matrix varies, as shown in Figure \ref{fig:xp:Sparsity2}.

\begin{figure}[!h]
    \centering
    \includegraphics[width=0.40\textwidth]{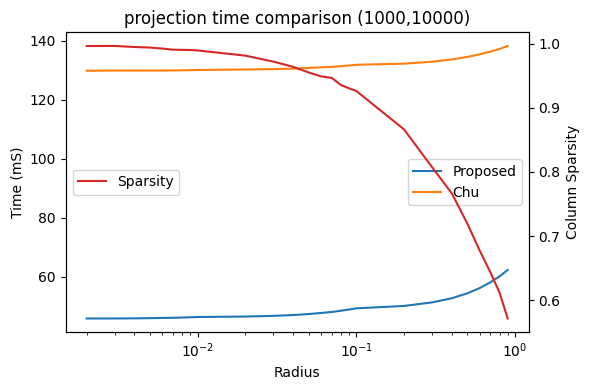}
    \includegraphics[width=0.40\textwidth]{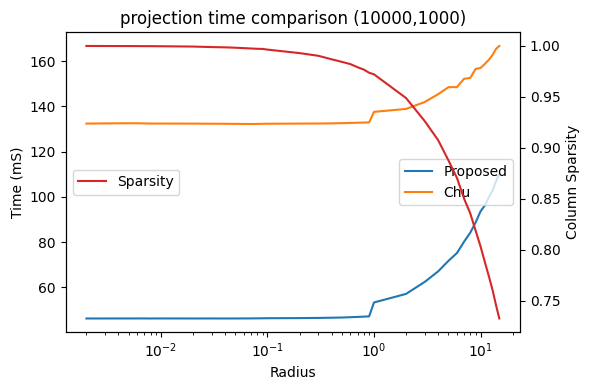}
    \caption{ Projection time for matrix sizes (left) 1000x10000, (right) 10000x1000.}
    \label{fig:xp:Sparsity2}
\end{figure}

For the second experiment, we propose to vary the size of the matrix instead of the radius. 
Figure~\ref{fig:xp:projN} gives a global view of the methods as the matrix size is increasing.
we can see that as the matrix size growth, even for the radius of 1, the proposed method is significantly faster. 
Indeed, we can see that in both cases, the impact of the increase in the size has less impact on the proposed method.
Note that the figure showing increase of size with fixed $n$ is the best scenario for the proposed algorithm as the sparsity is increasing up with the size.
We can see that overall, the proposed method is faster in average than the other methods.
In the CAE experiment of the next section the proposed method was in average 2.2 times faster than \textit{Chu et. al.} given the configuration of the network.

\begin{figure}[t]
    \centering
    \includegraphics[width=0.40\textwidth]{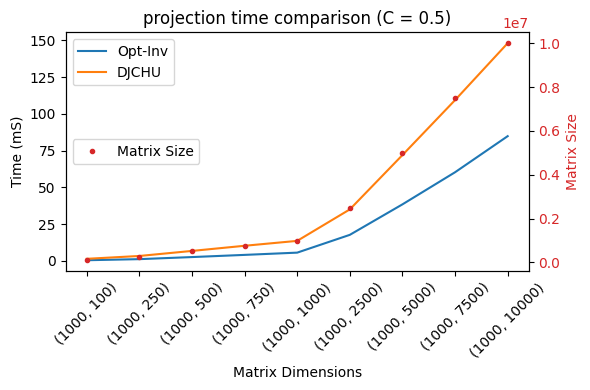}
    \includegraphics[width=0.40\textwidth]{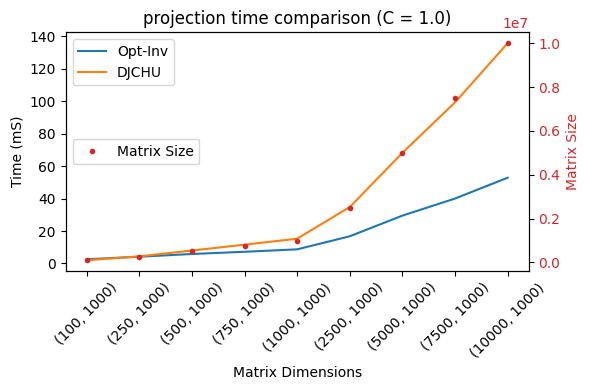}
    \caption{(left) Projection time for a fixed $n$. (right) Projection time for a fixed $m$}
    \label{fig:xp:projN}
\end{figure}

Finally, the complexity of the algorithm ($O(nm + J\log(nm))$) is dependent of a global constant named $J$.
We propose in this experiment to analyze the actual value taken by this constant in the same setting as the previous experiment.
Figure~\ref{fig:xp:J} shows the $J$ value as a percentage of the matrix size. 
Indeed, the range of values of $J$ is $[1,nm]$.
As we can see, the more the sparsity is growing, the smaller $J$ is, with a $J$ next to zero when the sparsity is close to 1.
Moreover, when the sparsity hits $0$, which implies that the resulting matrix is full, the $J$ is slightly above 3\% of the global size of the matrix ($nm$). 
This implies that the algorithm behave almost linearly when even a small amount of sparsity is present.

\begin{figure}[t]
    \centering
    \includegraphics[width=0.40\textwidth]{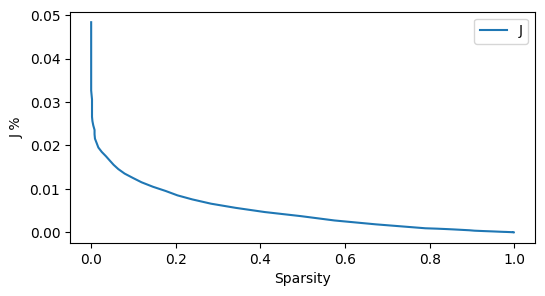}
    \caption{$J$ complexity constant value as a function of the resulting sparsity.
    The \% is of the global size of the matrix ($nm$). }
    \label{fig:xp:J}
\end{figure}

\section{Supervised Autoencoder (SAE) framework}
Autoencoders were introduced within the field of neural networks decades ago, their most efficient application at the time being dimensionality reduction \cite{Hinton_auto,deep}. 
Autoencoders were used in application ranging from unsupervised deep-clustering \cite{guo2017deep} to supervised learning to improve classification performance \cite{VAE,semisupervised,Rochelle}.
In this paper, we use the supervised autoencoder (SAE) neural network, analogously to \cite{ICASSP}, where no constraints as there is no parametric distribution assumption. \\

Let $X$ be the dataset in $\HH$, and $Y$ the labels in $\{0, \dots , k\}$, with $k$ the number of classes.
Let $Z \in \LL$ be the encoded latent vectors, $\widehat{X}$ $\in \HH $ the reconstructed data and $W$ the weights of the neural network.
Note that the dimension of the latent space $k$ corresponds to the number of classes.
Let $E(X)=Z$ be the encoder function of the autoencoder, and let $D(Z)=\widehat{X}$ be the decoder function of the autoencoder.
We use the Cross Entropy Loss as the classification loss $\mathcal{H}$ 
and the robust Smooth $\ell_1$ (Huber) Loss \cite{Huber} as the reconstruction loss $\psi$.
Parameter $\lambda$ is a linear combination factor used to define the final loss
$\phi(X,Y)=$ $\lambda \psi (X,\widehat{X}) +\mathcal{H}(Y,Z)$.

The goal is to learn the network weights $W$ minimizing the total loss.
In order to perform feature selection, as biomedical datasets often present a relatively small number of informative features, we also want to sparsify the network, following the work proposed in \cite{ICASSP}.
We propose to use the $\ell_{1,\infty}$ projection as a constraint to enforce sparsity in our model.
The global problem to minimize is
\begin{equation*}
\underset{W}{\text{minimize}}  \quad \phi(X,Y) \quad \text{ subject to }  \quad \normOneInf{W} \leq C.
\end{equation*}

The double descent algorithm was originally proposed as follows \cite{ICASSP}: after training a network, apply the $\ell_1\ projection$, set all weights smaller than a given threshold to zero, rewind the rest of the weights to their initial configuration, and then retrain the network from this starting configuration while keeping the zero weights frozen (untrained). 
To achieve structured sparsity, we replace the threshold by our $\ell_{1,\infty} $ projection.\\
We train the network using the classical Adam optimizer \cite{Adam}. Note that low values of $ C $ imply high sparsity of the network. 
The impact and selection of such a value is discussed in the next section.

\subsection{SAE experimental results}
We implemented our SAE method using the PyTorch framework for the model, optimizer, schedulers and loss functions. We chose the ADAM optimizer \cite{Adam}, as the standard optimizer in PyTorch.
We used a symmetric linear fully connected network \cite{ICASSP}, with the encoder comprised of an input layer of $d$ neurons, one hidden layer followed by a ReLU or SILU activation function and a latent layer of dimension $k$.\\
We compare the classical $\ell_{1}$  and $\ell_{1,\infty}$ projections. Note that our SAE provides a two-dimensional latent space where the samples can be visualized, and their respective classifications interpreted. 
Finally, our supervised autoencoder specifically provides informative features \cite{Captum} that are particularly insightful for biologists. We provide for each experiment the accuracy and the column sparsity (number of columns set to zero).

To generate artificial biological data to benchmark our $\ell_{1,\infty}$ projection in the SAE framework, we use the $make\_classification$ utility from scikit-learn. This generator controls the separability (set to $ 0.8$) of the synthetic dataset.
We generated $n=1,000$ samples (a number related to the number of samples in large biological datasets) with a  number $d$ of features. We chose  $d=10,000$ as the dimension to test because this is the typical range for biological data. We chose a low number of informative features ($ 64$ ) realistically with biological databases.
Our feature extraction encoder is an FCNN with 1 hidden layer composed of 100 neurons and $k=2$.
The complete code of these experiments (real and synthetic data) can be found 
online.\footnote{\url{https://webcms.i3s.unice.fr/Michel_Barlaud/}}

\begin{figure}[t]
    \centering
    \includegraphics[width=0.24\textwidth]{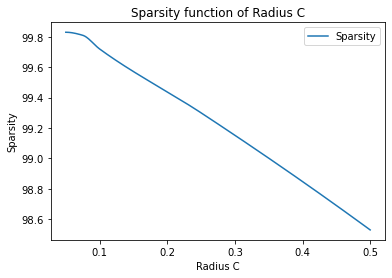}
    \includegraphics[width=0.24\textwidth]{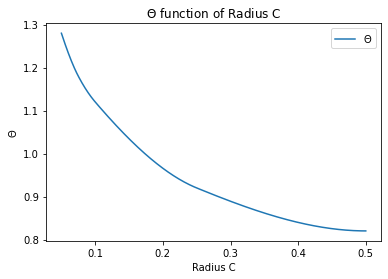}
    \vspace*{-5mm}\caption{Synthetic data. Left: sparsity and parameter $\theta$ as a function of the radius  C.Right:Parameter $\theta$ as a function of the radius  C. }
    \label{Synt_Sparsity}
\end{figure}

Figure~\ref{Synt_Sparsity} (left) shows the impact of the radius on the obtained sparsity.
Unsurprisingly, the larger is the radius, the smaller is the sparsity.
Yet, by considering that the best accuracy is around 0.1, the column sparsity is around $99.6$, 
hence the number of selected features is around 40.
Figure~\ref{Synt_Sparsity} (right) shows the impact of the radius on the obtained parameter $\theta$.
$\theta$ is the threshold used by the projection.
Note that the $\theta$ value does not decrease linearly with respect to the radius.
\begin{figure}[t]
    \centering
    \includegraphics[width=0.49\textwidth]{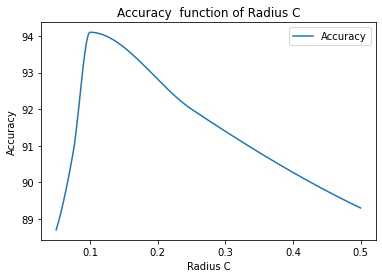}  
    \vspace*{-5mm}\caption{Synthetic data: Accuracy as a function of the radius C. }
    \label{Synt_accuracy}
\end{figure}

Figure \ref{Synt_accuracy} shows accuracy as a function of the radius C. The accuracy is highest for a radius $c=0.1 $.

\begin{table}[!h]
    \centering
    
    \begin{tabular}{|c|c|c|c|c|c|c|c|}
        \hline
          Synthetic data   & Baseline &$\ell_1$    & $\ell_{1,\infty}$   \\
        \hline
        Accuracy $ \%$ &   86.60 $\pm 2.0 $ &   89.1 $\pm 1.8$    & 92.77 $\pm 1.8$  \\
       \hline
        Colsp     &     0  & 81.94  &99.6\\
        \hline
    \end{tabular}\smallskip
    \caption{\textbf{Synthetic} dataset: Metrics over multiple seeds: comparison of no projection and  4 projections methods  $\ell_{1}$ ($\eta=10$), $\ell_{1,\infty}$ (C=0.1), Proj $\ell_{1,\infty}$ (C=0.1) masked.}
    \label{Comp_proj_synt}
\end{table}
Table~\ref{Comp_proj_synt} presents the results of the different possible implementations of the framework.
The baseline is an implementation that does not contain any projection.
It is the usual implementation of neural networks.
Then, $\ell_1$ and $\ell_{1,\infty}$ are the projection of their respective norms.
Compared to the baseline, the SAE using the $\ell_{1,\infty}$ projection improves the accuracy by $6.12 \%$, while using only $0.4\%$ of the features.
Moreover, the  $\ell_{1,\infty}$ projection improved the accuracy obtained with the $\ell_{1}$.
Such a result is not surprising, as the $\ell_{1}$ does not consider the relationship inside columns, and only see the matrix from a global point of view.
Finally, considering now the sparsity,  the  $\ell_{1,\infty}$ projection outperformed the $\ell_{1}$ by $ 15 \%$.

\section{Conclusion and Perspectives}
In this paper we introduced a fast projection algorithm onto the $\ell_{1,\infty}$ ball.
This projection algorithm is exact and of near-linear time complexity when the sparsity is high.
The worst-case time complexity of this algorithm is $\mathcal{O}\big(nm+J\log(nm)\big)$ for a matrix in $\mathbb{R}^{n\times m}$.
$J$ is a term that tends to 0 when the sparsity is high, and to $ n \times m$ in the worst case.
Moreover, as shown in our experimental section, for some matrices,
when the sparsity hits 0, the $J$ value is around 3\% of the matrix size, which implies an almost linear complexity $\mathcal{O}\big(nm\big)$.
Thanks to this complexity, and as shown in our experiments, the proposed algorithm is faster than existing methods.
In addition, the main goal of such a norm is to enforce structured sparsity for neural networks.
As shown in the second part of our experiments, the use of the $\ell_{1,\infty}$ ball to enforce sparsity 
is efficient in terms of feature selection,  accuracy, and  computational complexity.
This result confirms that sparsity-efficient projections should become mainstream for neural network training.
Our future work will involve sparsifying convolutional networks for image coding \cite{Twitter,ICASSP23} and extending the projection with a bilevel approach \cite{BPM}.


\bibliography{references}
\bibliographystyle{elsarticle-num}

\section{Appendix}
Consider the addition of an element to $\theta_t$ and its evolution with respect to its previous value.
Let $\theta_{t+1}$ be the new value after the element $Y_{k,l}$ is added to $\theta_t$.
First, let's consider the impact on its local sum. 
Let $v = \mu_k'$ be the new set of selected values and
$w=\mu_k$ be the value before the addition of the element.

\begin{eqnarray*} \label{eq:moyenne}
    \sum_{j \in Y_{k}^{\mu_k'}} \frac{Y_{k,j}}{|Y_{k}^{\mu_k'}|} &=& \sum_{j \in Y_{k}^{w}} \frac{Y_{k,j}}{|Y_{k}^{\mu_k'}|} + \frac{Y_{k,l}}{|Y_{k}^{\mu_k'}|} \\
    \sum_{j \in Y_{k}^{\mu_k'}} \frac{Y_{k,j}}{|Y_{k}^{\mu_k'}|} &=& 
    \sum_{j \in Y_{k}^{w}} \frac{Y_{k,j}}{|Y_{k}^{w}|} +
    \frac{Y_{k,l}-\overline{Y_i^w}}{|Y_{k}^{\mu_k'}|} \\
\end{eqnarray*}
Then we have:
\begin{eqnarray*} \label{eq:thetaPrime}
    \theta_{t+1} &=& \frac{\sum_{i\in A} 
    \sum_{j \in Y_{i}^{\mu_i}} \frac{Y_{i,j}}{|Y_{i}^{\mu_i}|} + 
    \frac{Y_{k,l} - \overline{Y_i^w}}{|Y_{k}^{\mu_k'}|} - C}{
    \sum_{i\in A} \frac{1}{|Y_{i}^{\mu_i'}|}} \\
    \theta_{t+1} &=& \theta_t +
    \frac{\frac{\theta_t|Y_{k}^{\mu_k'}|+|Y_{k}^{\mu_k}|(Y_{k,l} - \overline{Y_i^w}-\theta_t)}{|Y_{k}^{\mu_k'}||Y_{k}^{\mu_k}|}}{
    \sum_{i\in A} \frac{1}{|Y_{i}^{\mu_i'}|}} \\
    \theta_{t+1} &=& \theta_t +
    \frac{\frac{\theta_t+|Y_{k}^{\mu_k}|(Y_{k,l} - \overline{Y_i^w})}{|Y_{k}^{\mu_k'}||Y_{k}^{\mu_k}|}}{
    \sum_{i\in A} \frac{1}{|Y_{i}^{\mu_i'}|}} \\
\end{eqnarray*}
 
\begin{equation} \label{eq:ProjC:c}
    \theta >  j X_{i,j+1} - S_{i,j}
\end{equation}
This condition is sufficient to ensure an increasing $\theta$. 

When a row $k$, previously used until its $l$th element is removed:
Then we have:
\begin{eqnarray*} \label{eq:thetaPrimeA}
    \theta_{t+1} &=& \frac{\sum_{i\in A'} 
    \sum_{j \in Y_{i}^{\mu_i}} \frac{Y_{i,j}}{|Y_{i}^{\mu_i}|} + \overline{Y_k^l} - \overline{Y_k^l} - C}{
    \sum_{i\in A'} \frac{1}{|Y_{i}^{\mu_i'}|}} \\
    \theta_{t+1} &=& \theta_t + \frac{\frac{\theta_t}{|Y_k^l|} - \overline{Y_k^l}}{
    \sum_{i\in A'} \frac{1}{|Y_{i}^{\mu_i'}|}} \\
\end{eqnarray*}
This time, it is clear that if the sum of the values of the removed row is below $\theta$, 
then the row can be safely removed and the $\theta$ is increased.

\end{document}